  \newdimen\paravsp  \paravsp=1.3ex 
\DeclareMathOperator{\argmax}{argmax}
\DeclareMathOperator{\dual}{dual}
\DeclareMathOperator{\env}{env}
\def\,{\mskip 3mu} \def\>{\mskip 4mu plus 2mu minus 4mu} \def\;{\mskip 5mu plus 5mu} \def\!{\mskip-3mu}
\def\textmuskip{\thinmuskip= 0mu                    \medmuskip=  1mu plus 1mu minus 1mu \thickmuskip=2mu plus 3mu minus 1mu}
\newtheorem{theorem}{Theorem}
\newtheorem{conjecture}[theorem]{Conjecture}
\newtheorem{definition}[theorem]{Definition}
\newenvironment{keywords}{\centerline{\bf\small
Keywords}\begin{quote}\small}{\par\end{quote}\vskip 1ex}
\newenvironment{proof}{{\noindent\bf Proof.}}{\qed\vskip 1ex}
\newtheorem{myexample}[theorem]{Example} 
\def\paradot#1{\vspace{\paravsp plus 0.5\paravsp minus 0.5\paravsp}\noindent{\bf\boldmath{#1.}}} 
\def\hrefurl#1{\href{#1}{\rule{0ex}{1.7ex}\color{blue}\underline{\smash{#1}}}} 
\def\eps{\varepsilon}           
\def\epstr{\epsilon}            
\def\qed{\hspace*{\fill}\rule{1.4ex}{1.4ex}$\quad$\\} 
\def\eoe{\hspace*{\fill} $\blacklozenge\quad$} 
\def\eor{\hspace*{\fill} {\LARGE\textbullet}$\quad$} 
\def\SetR{\mathbb{R}}           
\def\SetN{\mathbb{N}}           
\def\SetB{\mathbb{B}}           
\def\cA{{\cal A}}                
\def\cE{\cal{E}}                 
\def\cM{\cal{M}}                 
\def\cX{{\cal X}}               
\def\geqx{\;\stackrel{\times}{\geq}\;}
\def\eqx{\;\stackrel{\times}{=}\;}
\def\ngeqx{\;\stackrel{\times}{\ngeq}\;}
\def\h{\text{\ae}} 
\def\lscsemi{\cM^\text{semi}_\text{lsc}}
\def\lscccs{\cM^\text{ccs}_\text{lsc}}
\def\jointxi{\xi^U} 
\begin{document}


\title{\bf\Large\hrule height5pt \vskip 4mm 
Formalizing Embeddedness Failures \\ in Universal Artificial Intelligence
\vskip 4mm \hrule height2pt}
\author[1]{Cole Wyeth}
\author[2]{Marcus Hutter}
\affil[1]{\normalsize Cheriton School of Computer Science \\
\normalsize University of Waterloo\\
\normalsize 200 University Ave W, Waterloo, ON N2L 3G1, Canada \\
\normalsize \hrefurl{https://colewyeth.com/}
}
\affil[2]{\normalsize 
Google DeepMind \& Australian National University\\
\normalsize \hrefurl{http://www.hutter1.net/}
}
\date{April 2025}
\maketitle

\begin{abstract}
We rigorously discuss the commonly asserted failures of the AIXI reinforcement learning agent as a model of embedded agency. We attempt to formalize these failure modes and prove that they occur within the framework of universal artificial intelligence, focusing on a variant of AIXI that models the joint action/percept history as drawn from the universal distribution. We also evaluate the progress that has been made towards a successful theory of embedded agency based on variants of the AIXI agent.   
\vspace{5ex}\def\contentsname{\centering\normalsize Contents}\setcounter{tocdepth}{1}
{\parskip=-2.7ex\tableofcontents}
\end{abstract}

\begin{keywords} 
Universal artificial intelligence, Solomonoff induction, evidential decision theory
\end{keywords}

\newpage
\section{Introduction}\label{sec:Intro}

The original AIXI reinforcement learning agent, intended as a nearly parameter-free formal gold standard for artificial general intelligence (AGI), is a \emph{Cartesian dualist} that believes it is interacting with an environment from the outside, in the sense that its \emph{policy} is fixed and not overwritten by anything that happens in the environment, though its \emph{actions} can certainly adapt based on the percepts it receives. This is frequently compared to a person playing a video game, who certainly does not believe he is being simulated by the game but rather interacts with it only by observing the screen and pressing buttons. In contrast, it would presumably be important for an AGI to be aware that it exists within its environment (the universe) and its computations are therefore subject to the laws of physics. With this in mind, we investigate versions of the AIXI agent \cite{hutter_theory_2000} that treat the action sequence $a$ on a similar footing to the percept sequence $e$, meaning that the actions are considered as explainable by the same rules generating the percepts. The most obvious idea is to use the universal distribution to model the joint (action/percept) distribution (even though actions are selected by the agent). Although this is the most direct way to transform AIXI into an embedded agent, it does not appear to have been analyzed in detail; in particular, it is usually assumed (but not proven) to fail (often implicitly, without distinguishing the universal sequence and environment distributions, e.g. \cite{fallenstein_reflective_aixi_2015}).

\paradot{Outline}
First, we discuss the more sophisticated approaches to embedded AIXI-like agents as points of reference. Then we give a highly compressed introduction to the notions of algorithmic information theory (AIT) and particularly universal artificial intelligence (UAI) needed to model universal mixtures of sequence distributions and environments, followed by some mappings that relate sequence and environment distributions. Finally, we introduce some positive and some negative results for our embedded AIXI variant (called joint AIXI), each of which is perhaps a little surprising, but follows easily from recent progress in AIT.   

\paradot{Terminology for the problem setting}
AIXI has a harder incomputability level than the environments in its hypothesis class, since its actions are not sampled from its belief distribution. This means that we are analyzing an \emph{unrealizable} situation, where the interaction history is generated by a process outside the hypothesis class. For the purposes of learning the joint distribution, it is hard to find any useful guarantees on the action sequence, so we will treat it as \emph{adversarially} chosen and consider the worst case.  

\section{Related Work}\label{sec:related_work}

Several sophisticated embedded versions of AIXI have been proposed. 

\paradot{Reflective oracles}
By expanding AIXI's hypothesis class to include machines with access to a special type of ``reflective'' oracle \cite{fallenstein_reflective_oracles_2015}, researchers at the Machine Intelligence Research Institute (MIRI)  were able to construct a version of AIXI that is at the same incomputability level as the environments in its hypothesis class \cite{fallenstein_reflective_aixi_2015}. This reflective AIXI faces a \emph{realizable} learning problem, so there is no reason to view its own actions as adversarially chosen. In fact, reflective oracle machines can directly compute the conditional probabilities of actions and percepts (not just their joint distributions) so that the perspective change between sequence and environment distribution becomes trivial. We believe that reflective AIXI is an excellent (and perhaps underappreciated) approach to embeddedness, primarily because there is a limit-computable reflective oracle \cite{leike_formal_2016}, which means that reflective AIXI has a stochastic anytime algorithm. This is a better computability result than has been demonstrated for AIXI, which may be as hard as $\Delta^0_3$ without $\eps$-approximation \cite{leike_computability_2018}, meaning that reflective AIXI not only addresses embeddedness concerns but actually suggests an effective AIXI approximation. The main (serious) limitation of reflective AIXI is that it is still a computationally unbounded model of intelligence, so it is not clear how it treats recursive self-improvement (this is an interesting research question). 

\paradot{Self-AIXI}
The Self-AIXI agent \cite{catt_self-predictive_2023} functions in a similar way to reflective AIXI, but it is uncertain of its own policy as well as its environment. The framework in the paper does not specify particular belief distributions for either of these, but asks for choices that make an optimal policy lie in the policy class, a realizability assumption. It is easy to find such a choice by taking advantage of reflective oracles (this is proven in a forthcoming paper of ours, \cite{wyeth_hutter_reflective_2025}). Self-AIXI plans only one step ahead to locally maximize its action-value function; the variation of our joint AIXI based on this strategy can easily be analyzed by the same methods introduced in this paper, though our focus on percept prediction becomes less justifiable. 

\paradot{Space-time embedded intelligence}
Orseau and Ring depart more drastically from the original AIXI model but continue to take advantage of the tools of UAI by proposing various degrees of embeddedness of an agent inside the universe's computation \cite{orseau_space-time_2012}. In their most extreme model, the agent is only a collection of bits on the environment machine's tape. Although this seems to capture all problems of embeddedness, it also makes so few assumptions that we are not convinced that it is a useful theory of intelligence.

\paradot{Embedded decision theories}
More extreme departures from the UAI framework such as Infra-Bayesian Physicalism \cite{kosoy_infra-bayesian_2021} and updateless decision theory \cite{dai_towards_2009} are beyond the scope of this paper. Our focus is on the most direct modification of AIXI for embedded agency, which seems seriously neglected in the literature. 

\section{Mathematical Preliminaries}\label{sec:math_prelim}

\paradot{Notation}
For any finite alphabet $\Sigma$, we denote the set of finite strings over $\Sigma$ as $\Sigma^*$ and the set of infinite sequences over $\Sigma$ as $\Sigma^\infty$. If $s \in \Sigma^*$ or $s\in\Sigma^\infty$, then $s_i \in \Sigma$ is the $i^\text{th}$ symbol of $s$, indexing from 1. Similarly $s_{i:j}$ for $i \leq j$ is the substring from indices $i$ to $j$. Particular alphabets we will discuss are the set of actions $\cA$ available to an agent and the set of percepts $\cE$ that the environment $\nu$ might produce and send to the agent. A percept consists of an observation in $\mathcal{O}$ and a reward in $\mathcal{R} \subset \mathbb{R}$. The action/percept at time $t$ will be denoted $a_t/e_t$, and the history of actions and observations before time $t$ will be written $\h_{<t} = a_1e_1...a_{t-1}e_{t-1}$. 

\begin{definition}[lower semicomputable]
A function $f$ is \emph{lower semicomputable} (l.s.c.) if there is a computable function $\phi(x,k)$ monotonically increasing in its second argument with $\lim_{k\to\infty} \phi(x,k) = f(x)$. That is, $f$ can be approximated from below.  
\end{definition}
In AIT the history distribution often has a probability gap because interaction may terminate at a finite time. For this reason, it is modeled by a ``semimeasure,'' not a proper probability measure. 

\begin{definition}[semimeasure]
A semimeasure\footnote{Technically, this only defines a pre-semimeasure, but a unique \emph{sensible} extension to the generated $\sigma$-algebra exists \cite{wyeth_hutter_semimeasures_2025}.} $\nu$ is a function $\Sigma^*\to\mathbb{R}^+$ satisfying $\nu(x) \geq \sum_{a \in \Sigma} \nu(xa)$. 
\end{definition}
For our purposes, semimeasures always assign probability $\leq 1$ to the empty string $\epstr$. The set of such l.s.c.\ semimeasures is denoted $\lscsemi$.

\begin{definition}[$\xi_U$]\label{def:universal_mixture}
    The universal distribution $\xi_U$ is defined as 
    \begin{equation}\label{eq:xi_U}
        \xi_U(\h_{1:t}) :=  \sum_{\nu_i\in\lscsemi} w_i \nu_i(\h_{1:t})
    \end{equation}
    for $\h\in(\cA\times\cE)^\infty$, where $i\mapsto w_i>0$ is a l.s.c.\ function with $\sum_i w_i\leq 1$, e.g.\ $w_i:=[i(i+1)]^{-1}$.
    An alternative construction is
    \begin{equation}
        \xi_U(x) \stackrel{\times}{=}\; \sum_{p: U(p) = x*} 2^{-l(p)}
    \end{equation}
    with monotone UTM $U$ (a ``joint'' distribution producing sequences which are NOT action contextual).
\end{definition}
For simplicity of exposition we assume $\cA = \cE$ by expanding the smaller alphabet. 

\paradot{Chronological semimeasures}
We write $\nu^\cdot(e_{1:t} || a_{1:t})$ to denote the probability that the environment $\nu^\cdot$ produces percepts $e_{1:t}$ when the agent takes actions $a_{1:t}$. Formally, this notation is used whenever $\nu^\cdot(\cdot||\cdot)$ is a two-argument function satisfying the chronological semimeasure condition $\nu^\cdot(e_{1:t} || a_{1:t}) \geq \sum_{e_{t+1}} \nu^\cdot(e_{1:t+1}||a_{1:t+1})$. Despite the name, when the two arguments are treated as forming one interaction history $\h_{1:t}$, this is actually a weaker requirement than ordinary semimeasures must satisfy. Usually, the superscript $\cdot$ is replaced by some index or identifying symbol, distinguishing environments from sequence distributions, which use subscripts when any adornment is required.  

\paradot{Semimeasure representation}
It is clear that for any semimeasure $\nu_\cdot$, the map $e_{1:t},a_{1:t} \rightarrow  \prod_{i=1}^t \nu_\cdot(e_i|\h_{<i}a_i)$ defines a chronological semimeasure that we will write as $\nu^\cdot(e_{1:t}|| a_{1:t})$. Conversely, if $\nu^\cdot$ is a chronological semimeasure, we can choose $\nu_\cdot$ so that $\nu_\cdot(a_i|\h_{<i}) = 1/|\cA|$ (or any arbitrary element of $\Delta \cA$) and $\nu_\cdot(e_i|\h_{<i}a_i) = \nu^\cdot(e_{1:i}||a_{1:i})/\nu^\cdot(e_{<i}||a_{<i})$ (and $\nu_\cdot$ has 0 probability of producing a percept in an action position or vice versa). Then $\nu_\cdot$ obviously satisfies the semimeasure property at action positions and satisfies the semimeasure property at percept positions by the chronological semimeasure property of $\nu^\cdot$, and it is easy to see that $\nu^\cdot(e_{1:t}||a_{1:t}) = \prod_{i=1}^t \nu_\cdot(e_i|\h_{<i}a_i)$. Satisfying the chronological semimeasure property is equivalent to having such a semimeasure representation; unfortunately, this representation result does not seem to hold when we restrict to \emph{l.s.c.} (chronological) semimeasures\footnote{The related fact that the conditionals of an l.s.c. semimeasure may not be l.s.c. follows easily from a diagonalization argument for $\xi_U$. It seems harder to find a clean example that \emph{clearly} shows the chronological semimeasure on half of the bit positions, induced by $\env$ of an l.s.c. semimeasure, is not l.s.c.}. 

\begin{definition}[$\xi^\text{AI}$]\label{def:universal_environment}
    The universal chronological semimeasure $\xi^\text{AI}$ is defined by 
    \begin{equation}\label{eq:xiai}
        \xi^\text{AI}(e_{1:t}||a_{1:t}) := \sum_{\nu^i \in \lscccs} w_i \nu^i(e_{1:t}||a_{1:t})
    \end{equation}
    with $w_i$ as in \cref{def:universal_mixture}.
    Alternatively, we can obtain $\xi^\text{AI}$ by 
    \begin{equation}
        \xi^\text{AI}(e_{1:t}||a_{1:t}) \stackrel{\times}{=}\; \sum_{p: U^C(p,a_{1:t}) = e_{1:t}} 2^{-l(p)} 
    \end{equation}
    where  ``chronological'' UTM $U^C$ only reads actions up to time $t$ before producing $e_t$.   
\end{definition}
Note that when $\xi^\text{AI}$ is viewed as a function of $\h_{1:t}$, it is not a semimeasure because it is not subadditive at action indices. 

\begin{definition}[domination]
A semimeasure $\nu$ (multiplicatively) dominates a semimeasure $\mu$, written $\nu \;\stackrel{\times}{\geq}\; \mu$, if $\exists c \in \mathbb{R}^+$ such that $\forall x \in \Sigma^*$, $\nu(x) \geq c\mu(x)$.  
\end{definition}
For chronological semimeasures dominance requires the above to hold for each action sequence. Multiplicative dominance is stronger than absolute continuity, which is the usual criterion for merging-of-opinions style results \cite{blackwell_merging_1962}. 

For our purposes, chronological semimeasures will represent either an environment or an agent's policy. An agent's goal is to maximize its cumulative (often discounted) reward by choosing an optimal sequence of actions. The optimal policy is defined as
\begin{equation}
    \begin{split}
        \pi^*_\nu &= \argmax_\pi V^\pi_\nu \\ 
        &:= \argmax_\pi \sum_{t=1}^m \sum_{\h_{1:t}} r_t\pi(a_{1:t}||e_{1:t-1})\nu(e_{1:t}||a_{1:t}) \\
    \end{split}
\end{equation}
When $\pi$ is deterministic, we can abuse notation by treating it as a function from histories to actions. Then, in the case that $\mu$ is a measure, we can write the constraint\footnote{Technically this constraint may be ignored off-policy.} on the optimal policy's action choice explicitly as
\begin{equation}
    \pi^*_\nu(\h_{1:t}) \in \argmax_{a_{t+1}} \sum_{e_{t+1}} ... \max_{a_m} \sum_{e_m} \nu(e_{1:m}||a_{1:m}) \sum_{i=t+1}^m r_t
\end{equation}
With discounting, this can also be extended to infinite horizons, but the distinction will not be important for us.

\section{The Dualistic Mixture}

We introduce maps $\env$ and $\dual$ that (respectively) convert a (semimeasure) history distribution to an environment and combine a policy and environment to get a history distribution, then apply these ideas to introduce joint AIXI in the next section.   

An observation we will find useful is that given any pair of l.s.c.\ chronological semimeasures $\nu$ generating percepts and $\pi$ generating actions, the history distribution $\nu^\pi$ that they induce is an l.s.c.\ semimeasure. To avoid interfering with (super/sub)scripts, we usually write $\dual(\nu,\pi) := \nu^\pi$. There is a semimeasure that encodes the assumption that actions are generated by a l.s.c.\ agent and percepts by an l.s.c.\ environment:
\[
\xi_{\dual} := \sum_{\nu, \pi \in \cM^\text{semi}_\text{lsc}} w^\pi_\nu \dual(\nu, \pi)
\]
We will assume that $w^\pi_\nu = \omega_\pi w_\nu=2^{-K(\pi)}2^{-K(\nu)}$ (agent and environment are independent). This is the assumption made by Self-AIXI which makes it less general than our joint AIXI. It is immediate that $\xi_U \smash{\geqx} \xi_{\dual}$.
Let 
\[
\env(\nu)(e_{1:t}||a_{1:t}) := \prod_{i=1}^t \nu(e_i|\h_{<i}a_i)
\]
To be well-defined, this requires $\nu > 0$, which holds for $\xi_U$. 
When $w^\pi_\nu = \omega_\pi w_\nu$, factoring yields $\xi_{\dual} = \dual(\sum_\pi  \omega_\pi \pi^\cdot, \sum_\nu w_\nu \nu^\cdot)$ so $\env(\xi_{\dual}) = \sum_\nu w_\nu \nu^\cdot \eqx \xi^\text{AI}$. 

\section{Joint AIXI}
Now we introduce our joint AIXI model which uses the environment version of $\xi_U$ to model the joint distribution. In particular we investigate its relationship to the conventional $\xi^\text{AI}$. Let $\jointxi := \env(\xi_U)$. Then
\begin{equation}
    \begin{split}
        \jointxi(e_t|\h_{<t}a_t) &= \xi_U(e_t|\h_{<t}a_t) \\
        &= \frac{\sum_i w_i \nu_i(\h_{<t}a_te_t)}{\xi_U(\h_{<t}a_t)} \\
        &= \sum_{\nu} w_i(\h_{<t}a_t) \nu_i(e_t|h_{<t}a_t) \\
        &= \sum_{\nu} w_i(\h_{<t}a_t) \nu^i(e_t|h_{<t}a_t) \\
    \end{split}
\end{equation}
where
\begin{equation}\label{eq:posteriors}
    w_i(\h_{<t}a_t) := \frac{\nu_i(\h_{<t}a_t)}{\xi_U(\h_{<t}a_t)}
\end{equation}

so $\jointxi$ is not a linear combination of $\nu^i$ because of the way that the actions control the weights; $\jointxi$ encodes a kind of sequential action evidential decision theory \cite{everitt_sequential_2015}, because semimeasures that prefer the action sequence $a$ are weighted more heavily as environments in $\jointxi$.
Joint AIXI is defined as a Bayes-optimal policy for this belief distribution\footnote{Arguably, this decision rule is somewhat short-sighted: despite planning ahead, it does not condition on its intended future decisions - that is, $\xi^U(e_t|\h_{<t}a_t)$ does not depend on $a_{t+1:\infty}$. In contrast $\xi^\text{AI}_\text{alt}$ of \cite{hutter_universal_2005}, when paired with the iterative value function, can be viewed as attempting to update on intended actions in advance, but actually fails to meet the conditions of a chronological semimeasure (our unpublished result).}:
\begin{equation}
    \pi^\text{JAIXI} := \pi^*_{\jointxi}
\end{equation}

\paradot{An aside}
An arguably more natural definition is
\[
\xi^{\env(U)}(e_{1:t}||a_{1:t}) := \sum_\nu w_\nu \env(\nu)(e_{1:t}||a_{1:t}) \geqx \env(\xi_U)(e_{1:t}||a_{1:t}) 
\]
Another way of writing this is
\[
\xi^{\env(U)}(e_t|\h_{<t}a_t) = \sum_\nu w_\nu(e_{<t}||a_{<t}) \nu(e_t|h_{<t}a_t) 
\]
where $w_\nu(e_{<t}||a_{<t}) = \env(\nu)(e_{<t}||a_{<t})/\xi^{\env(U)}(e_{<t}||a_{<t})$. Note that $w( \cdot || \cdot )$ is not a strictly correct use of the $||$ notation, but only indicates a ratio of chronological semimeasures. $\xi^{\env(U)}$ is not the same as $\env(\xi_U)$, but in fact dominates $\env(\xi_U)$. Also, for any l.s.c.\ policy $\pi$, $\dual(\xi^\text{AI}, \pi)$ is an l.s.c.\ semimeasure, so
\[
\xi^{\env(U)}(e_{1:t}||a_{1:t}) \geqx \env(\dual(\xi^\text{AI}, \pi))(e_{1:t}||a_{1:t}) = \xi^\text{AI}(e_{1:t}||a_{1:t}) 
\]
so $\xi^{\env(U)}$ also dominates the explicitly causal $\xi^\text{AI}$! This makes it a rather fascinating mixture, which seems to give some weight to both CDT and EDT, but to judge them by the standards of CDT when updating those weights. 

\paradot{Adversarial learning}
We can adapt results on the prediction of selected bits directly from \cite{lattimore_universal_selected_bits_2011} to understand the relationship between $\jointxi$ and $\xi^\text{AI}$. The authors investigated whether computable structure at certain computable indices of a sequence can be learned by the universal distribution even when the rest of the sequence is adversarially chosen. They were motivated by supervised learning, where the distribution of examples may be much more complicated/unpredictable than the distribution of labels. Our motivation is different: we expect the even = percept indices to come from a l.s.c.\ environment distribution (given the odd = action indices as context), but this simple partition of the indices matches the example case in \cite[Problem 2.9]{hutter_universal_2005}.  

\paradot{Failure to learn a simple environment}
According to \cite[Theorem 12]{lattimore_universal_selected_bits_2011}, it is possible for $\xi_U$ to fail to predict a binary sequence at even indices despite even bits exactly matching the preceding bits at odd indices. Formally (translated to our notation),

\begin{theorem}[Adversarial non-convergence of $\xi_U$]\label{thm:adv_non_conv}
    There exists $\omega\in\SetB^\infty$ with $\omega_{2n} = \omega_{2n-1}$ but $\liminf_n \xi_U(\omega_{2n}|\omega_{1:2n-1}) < 1$. 
\end{theorem}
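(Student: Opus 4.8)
The plan is to build $\omega$ by a diagonalization/fishing argument against the enumeration of l.s.c.\ semimeasures, adapting the construction behind \cite[Theorem 12]{lattimore_universal_selected_bits_2011}. The key idea is that while we must obey the hard constraint $\omega_{2n}=\omega_{2n-1}$, we have complete freedom in choosing the odd bits $\omega_{2n-1}$, and this freedom is enough to keep $\xi_U$ guessing. Concretely, I would maintain a growing prefix and, at each stage, look at the current conditional $\xi_U(\cdot\mid\omega_{1:2n-1})$ at the next even position: since $\omega_{2n}$ is forced to equal $\omega_{2n-1}$, I choose the odd bit $\omega_{2n-1}$ to be whichever of $0,1$ the mixture currently assigns the \emph{smaller} (or at most $\tfrac12$) predictive probability at that even slot. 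One pass does not suffice because $\xi_U$ can update the odd bits; so I would instead extend by whole blocks, driving a fixed ``dominant'' simple semimeasure's posterior weight up enough that I can read off its prediction, then contradict it.

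The cleanest route is the one sketched in \cite{lattimore_universal_selected_bits_2011}: fix a simple semimeasure $\rho$ concentrated on the single sequence $\omega$ we are constructing (so $\rho$ is defined lazily alongside $\omega$, but still with $O(1)$ description once the construction procedure is fixed — this is where the recursion theorem / a self-referential definition is used). Because $\rho$ is l.s.c.\ and puts all its mass on $\omega$, domination gives $\xi_U(\omega_{1:N})\geqx 2^{-K(\rho)}\rho(\omega_{1:N}) = \Omega(1)$ along $\omega$, so the posterior weight $w_\rho(\omega_{1:2n-1})$ stays bounded below. That alone does not force the conditional to $1$, though — the obstruction is that $\xi_U$ also contains semimeasures that have ``given up'' mass (the semimeasure gap), and more importantly semimeasures that predict the even bit correctly; to get a $\liminf<1$ we must ensure that at infinitely many $n$, enough competing weight sits on the \emph{wrong} even-bit prediction. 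The trick from the selected-bits paper is that the adversary (us, choosing odd bits) can always divert the odd-bit probability mass of the mixture so that whichever even bit is forced, a constant fraction of the surviving weight came from semimeasures expecting the opposite; formalizing this requires tracking how $\xi_U(\h_{<t}a_t)$ splits as a function of $a_t=\omega_{2n-1}$ and invoking that the two continuations $\omega_{2n-1}=0$ and $\omega_{2n-1}=1$ cannot both have their even-slot conditional near $1$ for the correct value (else the mixture would be predicting a bit it has not been told about).

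So the steps, in order: (1) set up the enumeration $\{\nu_i\}$, weights $\{w_i\}$, and recall $\xi_U\geqx 2^{-K(\nu)}\nu$ for every l.s.c.\ semimeasure $\nu$; (2) describe the stagewise construction of $\omega$ together with the auxiliary self-referential semimeasure $\rho$, appealing to the recursion theorem so that $K(\rho)=O(1)$; (3) at stage $n$, compute (to sufficient approximation, using lower semicomputability of $\xi_U$) the two candidate conditionals $\xi_U(b\mid \omega_{1:2n-2}\,b)$ for $b\in\{0,1\}$ at the even slot, and pick $\omega_{2n-1}=\omega_{2n}=b^\ast$ where the ``match'' probability is $\leq \tfrac12 + \eps_n$ — arguing such a $b^\ast$ exists because the total even-slot mass is at most what the odd-slot split allows and cannot concentrate on ``predict-the-just-seen-bit'' for both values of that bit simultaneously; (4) conclude $\xi_U(\omega_{2n}\mid\omega_{1:2n-1})\leq \tfrac12+\eps_n$ infinitely often, hence $\liminf_n \xi_U(\omega_{2n}\mid\omega_{1:2n-1})\leq\tfrac12<1$, while $\omega_{2n}=\omega_{2n-1}$ holds by construction. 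The main obstacle is step (3): making precise why the mixture cannot already predict the forced even bit — i.e., quantitatively bounding the weight of semimeasures that happen to ``shadow'' the odd bit into the even slot — and doing so uniformly in $n$ so the construction can be iterated; I expect to handle this exactly as in \cite[proof of Theorem 12]{lattimore_universal_selected_bits_2011}, by a counting/measure argument on how description-length-bounded semimeasures can allocate mass across the two odd-bit branches.
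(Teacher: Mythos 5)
Your scaffolding has two genuine defects, and they matter even though you ultimately plan to defer the key step to \cite{lattimore_universal_selected_bits_2011} --- which, for the record, is all the paper itself does: \cref{thm:adv_non_conv} is stated as a notational translation of their Theorem~12, not reproved. First, the self-referential semimeasure $\rho$ concentrated on $\omega$ is self-defeating. If an l.s.c.\ $\rho$ existed with $\rho(\omega_{1:N})=\Omega(1)$, domination would give $\xi_U(\omega_{1:N})\geq c>0$ for all $N$; since $\xi_U(\omega_{1:N})=\prod_{k\leq N}\xi_U(\omega_k|\omega_{<k})$, a product bounded away from zero forces $\xi_U(\omega_k|\omega_{<k})\to 1$ at \emph{every} position, i.e.\ your auxiliary device proves the negation of the theorem along your $\omega$. (Moreover, $\xi_U$-mass bounded away from zero along $\omega$ happens only for computable $\omega$, while the $\omega$ of \cref{thm:adv_non_conv} cannot be computable.) The lower bound one actually wants is the much weaker $\xi_U(\omega_{1:2n})\geqx 2^{-n}$, coming from the fixed, $\omega$-independent ``copy'' semimeasure that assigns $2^{-n}$ to each length-$2n$ string obeying $\omega_{2k}=\omega_{2k-1}$.

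Second, and more fundamentally, the mechanism in your step (3) is the wrong one. The quantities $\xi_U(0\mid x0)$ and $\xi_U(1\mid x1)$ are conditionals of \emph{different} prefixes; nothing ties them together, and they can both be near $1$ simultaneously --- the copy semimeasure alone ``predicts the just-seen bit'' for both values at once, and once copy-like hypotheses dominate the posterior (they do, since hypotheses predicting the even bit wrongly are suppressed at even slots) no counting argument over description-length-bounded semimeasures yields $\min_b \xi_U(b\mid xb)\leq \frac12+\eps$. Your justification appeals to surviving weight sitting on the \emph{wrong} even-bit prediction, but substantial weight on the wrong prediction would equally depress the normalized conditional $\hat{\xi}_U(b\mid xb)$, contradicting \cref{thm:adv_learning} (Theorem~10 of \cite{lattimore_universal_selected_bits_2011}), which guarantees $\hat{\xi}_U$'s even-slot conditionals tend to $1$ for \emph{every} choice of odd bits. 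The only way $\xi_U(\omega_{2n}\mid\omega_{<2n})$ can stay bounded away from $1$ infinitely often is through the normalization deficit --- probability mass that the semimeasure $\xi_U$ leaks to ``no continuation'' along adversarially chosen odd bits --- and your sketch never engages this, nor would a uniform per-stage bound of $\frac12+\eps_n$ come out of it. As written, the proposal therefore reduces to ``invoke the proof of Theorem~12,'' with surrounding steps that are not a viable alternative route to it.
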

Such $\omega$ must not be computable, but the theorem is still surprising since the even bits are very easy to predict for a human. Now consider the simplest possible environment $\mu^\text{id}$ with binary action space, empty observation space, and binary reward space, defined by
\begin{equation}
    \mu^\text{id}(e_t|\h_{<t}a_t) = [\![e_t = a_t]\!]
\end{equation}
Clearly, this is a l.s.c.\ chronological semimeasure. 

\begin{theorem}[Adversarial non-convergence of $\jointxi$]\label{thm:xi_U_non_conv}
    There exists $a \in \SetB^\infty$ such that $\jointxi(a_{1:t}||a_{1:t}) \rightarrow 0$ as $t \rightarrow \infty$.
\end{theorem}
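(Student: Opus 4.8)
The idea is to reduce the claim to \cref{thm:adv_non_conv}, since $\mu^\text{id}$ is exactly the ``copy the preceding bit'' structure, with action positions playing the role of odd indices and percept positions the role of even indices. Writing the interaction history $\h_{1:t}$ as a bit sequence of length $2t$, with $\h_{2i-1}$ the $i$-th action and $\h_{2i}$ the $i$-th percept, the choice $e_i = a_i$ for all $i$ produces a sequence $\omega$ with $\omega_{2n} = \omega_{2n-1}$. So first I would take the $\omega \in \SetB^\infty$ guaranteed by \cref{thm:adv_non_conv} and set $a_i := \omega_{2i-1}$, so that the realized joint history under actions $a$ and percepts $e_i = a_i$ is precisely $\omega_{1:2t}$.

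Next I would relate $\jointxi(a_{1:t}\|a_{1:t})$ to $\xi_U$-conditionals. By definition $\jointxi = \env(\xi_U)$, so $\jointxi(a_{1:t}\|a_{1:t}) = \prod_{i=1}^t \xi_U(\omega_{2i}\mid \omega_{1:2i-1})$ (using $e_i = a_i = \omega_{2i}= \omega_{2i-1}$ and that the action-position conditionals are irrelevant here because $\env$ only multiplies percept-position conditionals — this is the defining formula for $\env(\xi_U)$ in \cref{sec:related_work}'s successor section, ``Joint AIXI''). Then I would observe that, by \cref{thm:adv_non_conv}, there is $\epstr > 0$ and infinitely many $n$ with $\xi_U(\omega_{2n}\mid \omega_{1:2n-1}) \le 1 - \epstr$. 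An infinite product with infinitely many factors bounded below $1-\epstr$ and all factors in $(0,1]$ diverges to $0$; hence $\prod_{i=1}^t \xi_U(\omega_{2i}\mid\omega_{1:2i-1}) \to 0$, which is exactly $\jointxi(a_{1:t}\|a_{1:t}) \to 0$.

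One subtlety to check carefully is whether \cref{thm:adv_non_conv} really supplies a uniform gap $\liminf < 1$ rather than merely failure of convergence; the $\liminf_n \xi_U(\omega_{2n}\mid\omega_{1:2n-1}) < 1$ statement does give such a uniform $\epstr$, so this is fine, but I would state it explicitly. A second point is that \cref{thm:adv_non_conv} is about a pure bit-sequence universal distribution whereas here $\xi_U$ ranges over histories in $(\cA\times\cE)^\infty$ with $\cA = \cE = \SetB$; since these alphabets have been identified and $\xi_U$ on $(\cA\times\cE)^*$ is just the universal semimeasure on $\SetB^*$ read two bits at a time, the translation is immediate, but I would spell out the index correspondence $2i-1 \leftrightarrow$ action, $2i \leftrightarrow$ percept to make the reduction airtight.

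The main obstacle is really just making the alphabet/index bookkeeping between ``selected bits'' sequences and action/percept histories precise — there is no deep new argument, only a careful transcription of \cite[Theorem 12]{lattimore_universal_selected_bits_2011} into the chronological-semimeasure formalism and the elementary fact that the induced infinite product collapses.
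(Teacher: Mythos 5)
Your proposal is correct and takes essentially the same route as the paper, whose proof is just the one-line citation of \cref{thm:adv_non_conv}: you fill in exactly the implicit bookkeeping, namely setting $a_i:=\omega_{2i-1}$ for the adversarial $\omega$, expanding $\jointxi(a_{1:t}||a_{1:t})=\prod_{i=1}^t \xi_U(\omega_{2i}\mid\omega_{1:2i-1})$ from the definition of $\env(\xi_U)$, and observing that infinitely many factors bounded below $1$ (supplied by the $\liminf<1$ gap) drive the monotone partial products to $0$. Nothing further is needed.
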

\begin{proof}
    This is a direct result of \cref{thm:adv_non_conv}.
\end{proof}

\begin{theorem}
    $\jointxi \smash{\ngeqx} \xi^\text{AI}$
\end{theorem}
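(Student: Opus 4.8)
The plan is to derive $\jointxi \ngeqx \xi^\text{AI}$ as a corollary of \cref{thm:xi_U_non_conv} together with the trivial behavior of $\xi^\text{AI}$ on the identity environment $\mu^\text{id}$. The key observation is that $\mu^\text{id}$ is an l.s.c.\ chronological semimeasure, hence one of the $\nu^i$ in the mixture \req{eq:xiai}, so $\xi^\text{AI} \geqx \mu^\text{id}$ with some constant $c = w_{\mu^\text{id}} > 0$. Evaluating on the action sequence $a \in \SetB^\infty$ supplied by \cref{thm:xi_U_non_conv} and its matching percepts $e_{1:t} = a_{1:t}$, we get $\xi^\text{AI}(a_{1:t}\,||\,a_{1:t}) \geq c\,\mu^\text{id}(a_{1:t}\,||\,a_{1:t}) = c \cdot \prod_{i=1}^t [\![a_i = a_i]\!] = c$ for all $t$, i.e.\ $\xi^\text{AI}$ stays bounded below by a positive constant along this history.

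Next I would suppose, for contradiction, that $\jointxi \geqx \xi^\text{AI}$ as chronological semimeasures; by the definition of dominance for chronological semimeasures (which must hold for each action sequence), there is $c' > 0$ with $\jointxi(e_{1:t}\,||\,a_{1:t}) \geq c'\,\xi^\text{AI}(e_{1:t}\,||\,a_{1:t})$ for every $a$ and every $e_{1:t}$. Specializing again to the adversarial $a$ and to $e_{1:t} = a_{1:t}$ yields $\jointxi(a_{1:t}\,||\,a_{1:t}) \geq c' \xi^\text{AI}(a_{1:t}\,||\,a_{1:t}) \geq c'c > 0$ for all $t$, which directly contradicts $\jointxi(a_{1:t}\,||\,a_{1:t}) \to 0$ from \cref{thm:xi_U_non_conv}. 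Hence no such $c'$ exists and $\jointxi \ngeqx \xi^\text{AI}$.

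The only place that needs care — and the main (minor) obstacle — is checking that the chain $\env(\xi_U) = \jointxi$ is genuinely being evaluated the way \cref{thm:xi_U_non_conv} states, namely that $\jointxi(a_{1:t}\,||\,a_{1:t}) = \prod_{i=1}^t \xi_U(a_i \mid \h_{<i}a_i)$ with $\h_{<i} = a_1 a_1 \dots a_{i-1} a_{i-1}$, so that the non-convergence statement about $\xi_U$ on the self-referential history (even bits equal odd bits) transfers to a product going to $0$; this is exactly the content already extracted in the proof of \cref{thm:xi_U_non_conv}, so I would simply cite it. One should also note that $\xi^\text{AI} \geqx \mu^\text{id}$ uses that $\mu^\text{id}$ is l.s.c.\ and chronological, both of which are remarked immediately after the definition of $\mu^\text{id}$. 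No deeper AIT input is required beyond \cref{thm:adv_non_conv}; the result is essentially a repackaging of the adversarial non-convergence phenomenon as a failure of multiplicative dominance.
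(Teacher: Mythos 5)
Your proposal is correct and is essentially the paper's own argument: bound $\xi^\text{AI}(a_{1:t}||a_{1:t})$ below via domination of $\mu^\text{id}$, then contradict with $\jointxi(a_{1:t}||a_{1:t})\to 0$ from \cref{thm:xi_U_non_conv}. The extra care you spend unpacking $\env(\xi_U)$ is fine but not needed beyond what \cref{thm:xi_U_non_conv} already provides.
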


\begin{proof}
    $\xi^\text{AI}(a_{1:t}||a_{1:t}) \smash{\geqx} \mu_\text{id}(a_{1:t}||a_{1:t}) = 1$,
    while $\jointxi(a_{1:t}||a_{1:t})\to 0$.
\end{proof}
We expect that domination fails in the other direction as well because $\xi^U$'s posteriors (as in \cref{eq:posteriors}) treat $a$ much differently than $\xi^\text{AI}$'s posteriors, which should sometimes be advantageous for prediction. 

\begin{conjecture}
    $\xi^\text{AI} \smash{\ngeqx} \jointxi$
\end{conjecture}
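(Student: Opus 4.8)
The plan is to produce a single pair of infinite sequences $a,e$ for which $\xi^\text{AI}(e_{1:t}||a_{1:t})/\jointxi(e_{1:t}||a_{1:t})\to 0$ as $t\to\infty$, which already rules out any constant in $\xi^\text{AI}\geqx\jointxi$. The lever is the one the paper flags: $\jointxi(e_t|\h_{<t}a_t)=\sum_i w_i(\h_{<t}a_t)\,\nu^i(e_t|\h_{<t}a_t)$ weights a hypothesis by how well it explains the \emph{joint} prefix $\h_{<t}a_t$ --- hence also by how well its companion action law fits $a_{1:t}$ --- whereas $\xi^\text{AI}$'s posterior over $\lscccs$ scores hypotheses only through $\nu^i(e_{<t}||a_{<t})$ and treats the actions as inert context. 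Since $\jointxi=\env(\xi_U)$ is a chronological semimeasure that is \emph{not} l.s.c.\ --- the content of the footnoted remark that $\env$ fails to preserve lower semicomputability --- there is room for $\jointxi$ to out-predict, on percepts, every element of $\lscccs$, in particular the l.s.c.\ mixture $\xi^\text{AI}$.

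Concretely I would run a selected-bits construction in the spirit of \cite[Theorem 12]{lattimore_universal_selected_bits_2011} --- the same result behind \cref{thm:adv_non_conv} --- but carried out on the percept indices and with the two predictors' roles exchanged. Over a binary percept alphabet, choose the ``true'' percepts to look like fair coin flips to both predictors, fixing a common baseline decay for $\jointxi(e_{1:t}||a_{1:t})$; then arrange a countable family of l.s.c.\ joint laws which, around a sequence of times $t_k\to\infty$, each pull $\xi^\text{AI}$'s percept probability strictly below $1/2$ with a loss bounded away from $0$, exploiting that the percept component of the $k$-th fooling law is still ``fresh'' at $t_k$ because $\xi^\text{AI}$ never charges it for mispredicting actions. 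The mechanism of \cref{thm:adv_non_conv} then yields infinitely many such events, so that $\xi^\text{AI}(e_{1:t}||a_{1:t})$ acquires extra factors $(1-\eps)$ relative to the baseline, \emph{provided} one can simultaneously guarantee that $\jointxi$ is never dragged below baseline: every joint law still carrying weight after $\jointxi$'s action-scoring against $a_{1:t_k}$ (via \req{eq:posteriors}) should already predict the true $e_{t_k}$. Dividing the two products would then give $\xi^\text{AI}(e_{1:t}||a_{1:t})/\jointxi(e_{1:t}||a_{1:t})\lesssim(1-\eps)^{|\{k:\,t_k\le t\}|}\to 0$.

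The hard part --- and the reason the statement is posed as a conjecture --- is exactly that proviso: it requires tying the ``wrong'' percept behaviour of each fooling law inextricably to a \emph{wrong} prediction about the actions, in a way no computable re-derivation can undo. If the feature of $a_{1:t}$ that singles out the correct percept law is computable, then ``at percept position $t$, predict via $\nu_{f(a_{1:t})}$'' is itself an l.s.c.\ chronological semimeasure, so $\xi^\text{AI}$ reads it off the context exactly as $\jointxi$ does and the asymmetry collapses; if that feature is too wild, $\jointxi$ loses it too. The needle to thread is an action sequence $a$, together with companion action laws whose likelihood ratios on $a$ diverge, for which the induced ``which percept law'' selector is only $\emptyset'$-computable: then $\xi_U$ still concentrates its \emph{action} mass correctly --- keeping $\jointxi$ on the right percept law --- while no standalone l.s.c.\ chronological semimeasure encodes that selector, so $\xi^\text{AI}$ cannot imitate it. A secondary obstacle is checking that \emph{no} l.s.c.\ chronological semimeasure, not merely the intended fooling ones, salvages $\xi^\text{AI}$ on this $a$; for that I would lean on the robustness already present in \cite{lattimore_universal_selected_bits_2011} rather than reprove it. Failing a bespoke construction, I would instead attempt the stronger claim that $\env(\xi_U)$ is multiplicatively dominated by \emph{no} l.s.c.\ chronological semimeasure, by promoting the diagonalization behind the cited footnote (conditionals of an l.s.c.\ semimeasure need not be l.s.c.) to a domination separation; as $\xi^\text{AI}$ is l.s.c.\ and chronological, this would subsume the conjecture with no environment-level construction at all.
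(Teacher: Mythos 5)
There is no proof to compare against: the paper deliberately leaves this statement as a conjecture, offering only the heuristic remark that $\jointxi$'s posteriors treat the action sequence differently from $\xi^\text{AI}$'s, which ``should sometimes be advantageous.'' Your proposal is an elaboration of that same heuristic into a research plan, not a proof, and you say so yourself: the decisive step --- exhibiting an action sequence $a$, a percept sequence $e$, and fooling laws such that $\xi^\text{AI}(e_{1:t}||a_{1:t})$ suffers infinitely many losses bounded away from zero while $\jointxi$ stays at baseline, \emph{and} verifying that no element of $\lscccs$ (not just the intended ones) rescues the mixture $\xi^\text{AI}$ --- is exactly the ``proviso'' you leave unresolved. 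Note in particular that the second half of that step is where the real difficulty sits: each $\nu^i\in\lscccs$ may condition its percept predictions on arbitrary l.s.c.\ functionals of the full action history, so $\xi^\text{AI}$ automatically hedges over every computably-readable selector; your observation that the selector must therefore be only limit-computable (matching the $\Delta^0_2$ nature of $\xi_U$'s posterior \req{eq:posteriors}) identifies the right target, but constructing such an $a$ and proving that no l.s.c.\ chronological mixture component compensates is precisely the open content of the conjecture, and the machinery of \cite[Theorem 12]{lattimore_universal_selected_bits_2011} does not transfer with ``roles exchanged'' --- that theorem diagonalizes against the single mixture $\xi_U$, whereas here you must lower-bound $\jointxi$ and upper-bound $\xi^\text{AI}$ simultaneously on the same history.

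Your fallback is also not sound as stated. That $\env(\xi_U)$ fails to be l.s.c.\ (the paper's footnote) is a computability statement, and multiplicative domination is not: a non-l.s.c.\ chronological semimeasure can perfectly well satisfy $\xi^\text{AI}\geqx\env(\xi_U)$, since domination only asks for a pointwise bound up to a constant, uniformly over action sequences. So ``promoting the diagonalization to a domination separation'' is not a routine strengthening of the footnote; it would require a genuinely new argument that the \emph{values} of $\env(\xi_U)$ exceed those of every l.s.c.\ chronological semimeasure by unbounded factors, which again is essentially the conjecture itself (indeed it is stronger, since it quantifies over all of $\lscccs$ rather than just $\xi^\text{AI}$, although the two are equivalent up to constants because $\xi^\text{AI}$ dominates every element of $\lscccs$). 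In short: the direction of attack is reasonable and consistent with the paper's stated intuition, but no step of it is carried out, and the two places where you defer to existing results (the selected-bits theorem and the footnoted diagonalization) do not actually supply the missing argument.
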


\paradot{Normalization allows learning computable environments}
Recall that $\xi_U$ is only a semimeasure and not a proper probability measure. The most common ``normalization'' or completion of $\xi_U$ to a measure is called Solomonoff normalization:
\begin{equation}
    \hat{\xi}_U(\omega_t|\omega_{<t}) = \frac{\xi_U(\omega_t|\omega_{<t})}{\sum_{\omega_t'\in\Sigma} \xi_U(\omega_t'|\omega_{<t})}
\end{equation}
Surprisingly, applying this normalization allows a type of learning in the adversarial case. We translate \cite[Theorem 10]{lattimore_universal_selected_bits_2011} as follows:

\begin{theorem}[Adversarial learning for $\hat{\xi}_U$]\label{thm:adv_learning}
    Let $f : \SetB^* \rightarrow \SetB \cup \{\epstr\}$ be a total recursive function and consider $\omega\in\SetB^\infty$ satisfying $\omega_n = f(\omega_{<n})$ when $f(\omega_{<n}) \neq \epstr$. Then for any infinite sequence $n_1,n_2,...$ with $f(\omega_{n_i}) \neq \epstr$, $\lim_{i\rightarrow\infty} \hat{\xi}_U(\omega_{n_i}|\omega_{<n_i}) = 1$.
\end{theorem}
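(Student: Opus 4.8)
The plan is to reduce everything to the program-weighted form $\xi_U\eqx M$ of \cref{def:universal_mixture}, where $M$ is Solomonoff's universal semimeasure, so that multiplicative domination of every lower-semicomputable semimeasure---hence of every computable measure---is available, and then to exploit that Solomonoff normalization turns $\xi_U$ into a \emph{genuine} probability measure $\hat{\xi}_U$ on $\B^\infty$, so that Kraft-type (sub)additivity holds with equality rather than slack. Since $\hat{\xi}_U(\omega_t\mid\omega_{<t})\geq\xi_U(\omega_t\mid\omega_{<t})$, normalization only enlarges conditionals, so $\hat{\xi}_U(x)\geq\xi_U(x)$ for every finite $x$; combined with domination this gives $\hat{\xi}_U\geqx\nu$ for every computable measure $\nu$, and because $\hat{\xi}_U$ is a measure this upgrades from cylinders to $\hat{\xi}_U(A)\geqx\nu(A)$ on the whole Borel $\sigma$-algebra.

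I would then set up reference measures adapted to $f$. As $f$ is total recursive, the set $C\subseteq\B^\infty$ of sequences that stay $f$-consistent forever is closed, and along any prefix the ``free'' coordinates (those $n$ with $f(\omega_{<n})=\epstr$) are decidable; write $z$ for the subsequence of $\omega$ at its free coordinates, $\mathrm{rec}(z_{1:m})$ for the $f$-consistent prefix it determines, and $m(n)$ for the number of free coordinates below $n$. The two references are: (i)~the \emph{computable} measure $\rho$ with $\rho(b\mid x)=[\![b=f(x)]\!]$ when $f(x)\ne\epstr$ and $\rho(b\mid x)=\tfrac12$ otherwise, supported on $C$; and (ii)~the \emph{complexity-adaptive} semimeasure $\rho_M(x):=M(\mathrm{free}(x))\,[\![x\text{ consistent with }f]\!]\in\lscsemi$, also supported on $C$. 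Domination together with the previous paragraph then yields $\hat{\xi}_U([x]\cap C)\geqx\bar M([\mathrm{free}(x)])$, with $\bar M$ the cylinder measure of $M$ (which is $\eqx\bar{\xi}_U$), and in particular $\hat{\xi}_U(C)>0$.

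The core is a martingale/Kraft bound on how much $\hat{\xi}_U$-mass can sit on $f$-violations. For a selected index $n$ we have $[\omega_{<n}\bar\omega_n]\subseteq C^c$ and $[\omega_{<n}\bar\omega_n]\subseteq[\omega_{<n}]$, so
\[
\hat{\xi}_U(\bar\omega_n\mid\omega_{<n})=\frac{\hat{\xi}_U([\omega_{<n}\bar\omega_n])}{\hat{\xi}_U([\omega_{<n}])}\leq\frac{\hat{\xi}_U([\omega_{<n}]\cap C^c)}{\hat{\xi}_U([\omega_{<n}]\cap C)},
\]
whose denominator is $\geqx\bar M([z_{1:m(n)}])$ by the previous step. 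The numerator is non-increasing in $n$ and $\bigcap_n([\omega_{<n}]\cap C^c)=\{\omega\}\cap C^c=\emptyset$ since $\omega\in C$, so $\hat{\xi}_U([\omega_{<n}]\cap C^c)\downarrow0$ \emph{unconditionally}; this already settles the case in which $\omega$ (equivalently $z$) is computable, for then $\bar M([z_{1:m(n)}])$ is bounded away from $0$. The remaining, adversarial case---$z$ incompressible, so the denominator also tends to $0$---requires showing the leaked mass $\hat{\xi}_U([\omega_{<n}]\cap C^c)=\sum_{\text{first-violation }y\succ\omega_{<n}}\hat{\xi}_U([y])$ vanishes \emph{faster} than $\bar M([z_{1:m(n)}])$. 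I would group violations by the maximal forced run containing them, bound each group's mass by the cylinder above that run using exactness of $\hat{\xi}_U$, and transfer to free-coordinate space via $M(\mathrm{rec}(z_{1:m}))\eqx M(z_{1:m})$, whose $\leq$ half follows from the machine ``simulate $U$ but output only the free subsequence'' together with universality of $M$.

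The step I expect to be the main obstacle is exactly this last one: upgrading ``$\hat{\xi}_U(\bar\omega_n\mid\omega_{<n})\to0$ along the selected indices'' from $\hat{\xi}_U$-almost-every $\omega\in C$---which is immediate from L\'evy's martingale convergence theorem, equivalently from Solomonoff consistency applied to $\rho$---to \emph{every} $f$-consistent $\omega$, since the adversary may place $\omega$ on a $\hat{\xi}_U$-null set whose free coordinates are incompressible. Making the estimate ``leaked mass $=o(\bar M([z_{1:m(n)}]))$'' uniform over all such $z$ is the technical heart of \cite[Theorem~10]{lattimore_universal_selected_bits_2011}, which I would adapt; the ingredients isolated above (domination by $\rho$ and $\rho_M$, $\hat{\xi}_U$ being a genuine measure, and $M(\mathrm{rec})\eqx M(z)$) are what make the passage to the present selected-coordinate partition essentially bookkeeping.
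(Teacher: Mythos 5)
The paper itself offers no proof of this statement: it is presented purely as a notational translation of \cite[Theorem~10]{lattimore_universal_selected_bits_2011}, so the only ``ground truth'' to compare against is that cited proof. Your sketch contains several sound observations --- Solomonoff normalization makes $\hat{\xi}_U$ a genuine Borel measure with $\hat{\xi}_U(x)\geq\xi_U(x)$; the auxiliary objects $\rho$ and $\rho_M(x)=M(\mathrm{free}(x))[\![x\text{ consistent}]\!]$ are respectively a computable measure and an l.s.c.\ semimeasure dominated by $\xi_U$; and the numerator bound $\hat{\xi}_U([\omega_{<n}]\cap C^c)\downarrow 0$ by continuity from above is correct and clean. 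But the proposal is not a proof: the decisive quantitative step --- that the leaked mass on $C^c$ vanishes \emph{faster} than your denominator, uniformly over adversarially incompressible free bits --- is exactly the content of the theorem, and you explicitly defer it to the very result being translated. As it stands the argument only re-derives the easy case ($z$ computable, denominator bounded away from $0$) and then cites \cite{lattimore_universal_selected_bits_2011} for the rest, which is no more than what the paper already does.

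Two further points suggest the deferral is not ``essentially bookkeeping.'' First, your reduction targets a strictly stronger statement than the theorem: the ratio $\hat{\xi}_U([\omega_{<n}]\cap C^c)/\hat{\xi}_U([\omega_{<n}]\cap C)\to 0$ asserts posterior concentration on the event of \emph{never} violating $f$ in the entire future, whereas the theorem only requires the conditional of the \emph{next} forced bit to tend to $1$ along the selected subsequence. The published proof bounds products of normalized conditionals using $\hat{\xi}_U(\omega_{1:n})\geq\xi_U(\omega_{1:n})\eqx \xi_U(\mathrm{free}(\omega_{1:n}))$ on consistent prefixes and never needs (nor establishes) concentration on $C$; it is not clear your stronger claim is even true for every consistent $\omega$, so adapting their argument into your frame would require a genuinely new estimate, not a transfer. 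Second, the denominator bound $\hat{\xi}_U([\omega_{<n}]\cap C)\geqx\bar M([z_{1:m(n)}])$ is shaky: your Borel-level domination upgrade was justified only against computable \emph{measures}, while $\rho_M$ is a strict semimeasure, and the ``cylinder measure'' $\bar M([z])$ of the sensible extension of a semimeasure can be far smaller than $M(z)$ (possibly $0$), so the bound you plan to divide by may be vacuous. The usable lower bound is on cylinders, $\hat{\xi}_U(\omega_{1:n})\geqx M(\mathrm{free}(\omega_{1:n}))$, which is precisely the quantity the cited proof works with --- a sign that the product-of-conditionals route, rather than the set-theoretic ratio over $C$, is the one that closes.
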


In other words, if the bits at certain recursively checkable indices are a recursive function of the preceding sequence, $\hat{\xi}_U$ will eventually learn to make good predictions at those indices. Unfortunately, this positive result seems unlikely to generalize to the stochastic case (though this is an open problem).

\begin{theorem}[Adversarially learning environments]\label{thm:learning_det_env}
    If $\mu^\cdot \in \lscccs$ is deterministic and $e \sim \mu^\cdot(\cdot|a)$ with $l(e) = \infty$, then $\lim_{t\rightarrow\infty} \env(\hat{\xi}_U)(e_t|\h_{<t}a_t) = 1$. 
\end{theorem}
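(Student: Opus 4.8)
The plan is to reduce \cref{thm:learning_det_env} to \cref{thm:adv_learning} by exhibiting, for a fixed deterministic environment $\mu^\cdot$ and action sequence $a$, a total recursive function $f:\SetB^*\to\SetB\cup\{\epstr\}$ such that the interaction history $\omega := a_1 e_1 a_2 e_2 \ldots$ (interleaved) satisfies $\omega_n = f(\omega_{<n})$ exactly at the percept indices $n = 2,4,6,\ldots$, with $f(\omega_{<n}) = \epstr$ at the action indices $n = 1,3,5,\ldots$. First I would set up this interleaving carefully: since $\cA=\cE=\SetB$ by assumption, the history $\h_{<t}a_t e_t$ corresponds to a bit string, and $\env(\hat\xi_U)(e_t|\h_{<t}a_t)$ is by definition of $\env$ (together with the fact that Solomonoff normalization commutes appropriately with conditioning) equal to $\hat\xi_U(\omega_{2t}|\omega_{<2t})$ for the corresponding bit positions. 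This identification is mostly bookkeeping but must be done honestly, particularly checking that $\env$ applied to the normalized $\hat\xi_U$ really does give the one-step conditionals $\hat\xi_U(\omega_{2t}\mid\omega_{<2t})$ and not something off by a normalization factor at action positions.

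Next I would define $f$. On an input string $x$ of even length $2k$, interpret $x$ as $a_1 e_1 \ldots a_k e_k$, read off the actions $a_{1:k}$ and percepts $e_{1:k}$, and — because $\mu^\cdot$ is a \emph{deterministic} l.s.c.\ chronological semimeasure — compute the unique $e_{k+1}$ with $\mu^\cdot(e_{1:k+1}\| a_{1:k+1}) = \mu^\cdot(e_{1:k}\|a_{1:k})$, where $a_{k+1}$ is the next bit appended (so actually $f$ should be evaluated on the odd-length string $x a_{k+1}$, returning the forced percept). On odd-length inputs corresponding to an action position, $f$ returns $\epstr$. The key point is that $f$ is \emph{total recursive}: determinism of $\mu^\cdot$ means the conditional percept probabilities are in $\{0,1\}$, and l.s.c.\ plus the total-mass normalization (determinism forces the conditional to put mass exactly $1$ on one symbol, which is eventually detected by approximation from below) lets us compute $e_{k+1}$ in finite time. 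I should state this as a small lemma: a deterministic l.s.c.\ chronological semimeasure has a total recursive "next-percept" function along any history in its support. Then the hypothesis $e\sim\mu^\cdot(\cdot|a)$ with $l(e)=\infty$ guarantees $\mu^\cdot$ assigns positive probability to every prefix, so $\omega$ lies in the support and $\omega_n = f(\omega_{<n})$ holds at all percept indices.

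Finally, applying \cref{thm:adv_learning} with the index sequence $n_i = 2i$ yields $\lim_i \hat\xi_U(\omega_{2i}\mid\omega_{<2i}) = 1$, which by the identification from the first paragraph is exactly $\lim_{t\to\infty}\env(\hat\xi_U)(e_t\mid\h_{<t}a_t) = 1$. I expect the main obstacle to be the recursiveness/totality of $f$: one must rule out the possibility that, although $\mu^\cdot$ is deterministic and l.s.c., the single percept receiving all the conditional mass cannot be identified in finite time (e.g.\ if the conditional mass $\mu^\cdot(e_{<k+1}\|a_{<k+1})$ itself is only approximable and never exactly pinned down, so one cannot certify which successor gets mass $1$). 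The resolution uses determinism essentially — the conditional is a $0/1$ vector, so once the lower approximation to $\mu^\cdot$ on \emph{some} one-step extension exceeds half of the current lower approximation to $\mu^\cdot$ on the parent, that extension is the forced one — but making this precise, and confirming it gives a \emph{total} (not merely partial) recursive $f$ on all of $\SetB^*$ (including off-support inputs, where $f$ can be defined arbitrarily, say $0$), is the delicate part. A secondary subtlety is that \cref{thm:adv_learning} is stated for $f:\SetB^*\to\SetB\cup\{\epstr\}$ acting on the plain bit sequence, so the parity/interleaving convention has to line up so that "$f$ returns $\epstr$ at odd positions" is genuinely a property of the sequence-so-far and not of external timing information.
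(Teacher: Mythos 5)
Your proposal takes essentially the same route as the paper's own (very terse) proof: identify $\env(\hat{\xi}_U)(e_t|\h_{<t}a_t)$ with $\hat{\xi}_U(\omega_{2t}|\omega_{<2t})$ on the interleaved bit sequence, use determinism plus lower semicomputability (plus non-halting along the actual action sequence, guaranteed by $l(e)=\infty$) to get a recursively computable next-percept function, and apply \cref{thm:adv_learning} at the percept indices, so the verdict is: correct and matching, with your version merely more explicit. The one caveat worth recording is the totality issue you yourself flag: ``define $f$ arbitrarily off-support'' is not by itself a construction (support membership and halting of $\mu^\cdot$ are not decidable, so the natural search yields only a partial recursive predictor), but the paper's proof passes over exactly the same point, relying on the parenthetical that along the actual history the environment is a measure, so this is a shared loose end rather than a defect of your approach (aside from the harmless odd/even-length slip in stating where $f$ returns $\epstr$).
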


\begin{proof}
    A deterministic environment that is l.s.c.\ must also be recursive in the sense that the next percept is finitely computable from the history (it must also be a measure given this action sequence for the history to be infinite). If the action or percept spaces are not binary, choose a fixed binary encoding, and these statements also apply at the bit level. Observe that $\env(\hat{\xi}_U)(e_t|\h_{<t}a_t) = \hat{\xi}_U(e_t|\h_{<t}a_t)$ by definition and apply \cref{thm:adv_learning}.   
\end{proof}

\section{Discussion}\label{sec:disc}

The presumed failure of $\pi^\text{JAIXI}$ was the motivation for studying reflective AIXI, which neatly resolves the problems in this paper. We might expect joint AIXI's learning to sometimes fail because the history is not sampled from its belief distribution. Though we prove in this paper that it fails to converge to the correct answer in a simple case, we assume adversarially selected action bits. For a deployed agent, the action bits would be selected by the policy $\pi^\text{JAIXI}$ which may never produce these adversarial action sequences, so we do not know whether $\pi^\text{JAIXI}$ learns to behave well in reasonable environments. On the other hand, we prove that normalizing the joint distribution allows learning deterministic environments. To match the results for AIXI, we would actually like (on-policy) \emph{fast} convergence results against all l.s.c.\ chronological semimeasures. Both our positive and our negative results leave interesting open problems. The technical difficulties involved in establishing anything about joint AIXI justify Hutter's nontrivial choice to invent a universal mixture for l.s.c.\ \emph{chronological} semimeasures as a basis for studying Cartesian agents. 

\paradot{Acknowledgements}
This work was supported in part by a grant from the Long-Term Future Fund (EA Funds - Cole Wyeth - 9/26/2023).


\bibliographystyle{alpha}

\appendix

\section{List of Notation}\label{app:Notation}

\par\vspace{0pt plus \textheight}
\begin{samepage}
\begin{tabbing}
  \hspace{0.13\textwidth} \= \hspace{0.73\textwidth} \= \kill
  {\bf Symbol }      \> {\bf Explanation}                                                    \\[0.5ex]
  $i,j,k\in\SetN$    \> index for natural numbers                                            \\[0.5ex]
  $[\![\text{bool}]\!]$ \> =1 if bool=True, =0 if bool=False                                 \\[0.5ex]
  $|\cX|\equiv\#\cX$      \> size of set $\cX$.                                                   \\[0.5ex]
  $\cA$               \> finite alphabet like $\{a,...,z\}$ or ASCII or $\{0,1\}$.           \\[0.5ex]
  $\cA^*,\cA^\infty$\> set of all (finite, infinite) strings  over alphabet $\cA$\\[0.5ex]
  $\SetB$         \> the binary set $\{0,1\}$                                                \\[0.5ex]
  $A \times B$         \> Cartesian product of sets $A$ and $B$ \\[0.5ex]
  $t,n\in\SetN$      \> time index, e.g.\ $x_{1:n}$ or $x_{<t}$                              \\[0.5ex]
  $x_{1:n}\in\cA^n$   \> string of length $n$                                                \\[0.5ex]
  $x_{<t}\in\cA^{t-1}$\> string of length $t-1$                                              \\[0.5ex]
  $\epstr$           \> empty string                                                         \\[0.5ex]
  \rule{1.4ex}{1.4ex}\> end of proof                                                         \\[0.5ex]
  $\SetR,\SetN,...$  \> set of real,natural numbers                                          \\[0.5ex]
  $\nu,\nu^\cdot$ \> semimeasure, chronological semimeasure                                          \\[0.5ex]
  $\mu,\mu^\cdot$ \> true distribution/environment \\[0.5ex]
  $\xi,\xi^\cdot$ \> mixture of distributions/environments \\[0.5ex]
  $\xi_U$ \> Solomonoff's universal mixture distribution \\[0.5ex]
  $\xi^\text{AI}$ \> Hutter's universal mixture environment \\[0.5ex]
\end{tabbing}
\end{samepage}

\end{document}